\providecommand{\algorithmname}{Algorithm}
\theoremstyle{plain}
\newtheorem{thm}{\protect\theoremname}
  \theoremstyle{definition}
  \newtheorem{defn}[thm]{\protect\definitionname}
  \providecommand{\definitionname}{Definition}
\providecommand{\theoremname}{Theorem}
\title{Learning Transferable Domain Priors for Safe Exploration in Reinforcement
Learning}
\author[1]{Thommen George Karimpanal}
\author[1]{Santu Rana}
\author[1]{Sunil Gupta}
\author[1]{Truyen Tran}
\author[1]{Svetha Venkatesh}
\affil[1]{Applied Artificial Intelligence Institute, Deakin University, Australia}
\date{}                     
\begin{document}
\maketitle
\begin{abstract}
Prior access to domain knowledge could significantly improve the performance
of a reinforcement learning agent. In particular, it could help agents
avoid potentially catastrophic exploratory actions, which would otherwise
have to be experienced during learning. In this work, we identify
consistently undesirable actions in a set of previously learned tasks,
and use pseudo-rewards associated with them to learn a prior policy.
In addition to enabling safer exploratory behaviors in subsequent
tasks in the domain, we show that these priors are transferable to
similar environments, and can be learned off-policy and in parallel
with the learning of other tasks in the domain. We compare our approach
to established, \textit{state-of-the-art} algorithms in both discrete
as well as continuous environments, and demonstrate that it exhibits
a safer exploratory behavior while learning to perform arbitrary tasks
in the domain. We also present a theoretical analysis to support these
results, and briefly discuss the implications and some alternative
formulations of this approach, which could also be useful in certain
scenarios.
\end{abstract}

\section{Introduction}
Reinforcement learning (RL) \cite{sutton2011reinforcement} has proven
to be a versatile and powerful tool for effectively dealing with sequential
decision making problems. In addition to requiring only a scalar reward
feedback from the environment, its reliance on the knowledge of a
state transition model is limited. This has resulted in RL being successfully
used to solve a range of highly complex tasks \cite{Tesauro:1995:TDL:203330.203343,mnih2015human,silver2016mastering,Ng04invertedautonomous}.

However, RL algorithms are typically not sample efficient, and desired
behaviors are achieved only after the occurrence of several unsafe
agent-environment interactions, particularly during the initial phases
of learning. Even while operating within the same domain, commonly
undesirable actions (such as bumping into a wall in a navigation environment)
have to be learned to be avoided each time a new task (navigating
to a new goal location) is learned. This can largely be attributed
to the fact that in RL, behaviors are generally learned \textit{tabula-rasa}
(from scratch) \cite{dubey2018investigating}, without contextual
information of the domain it is operating in. This lack of contextual
knowledge is usually a limiting factor when it comes to deploying
RL algorithms in real world systems, where executing sub-optimal actions
during learning could be highly dangerous to the agent or to elements
in its environment. Providing RL agents with domain-specific contexts
in the form of suitable initializations and/or domain-specific, reusable
priors could greatly help mitigate this problem.

The challenge of addressing the issue of avoiding undesirable actions
during learning has been the primary focus of the field of safe RL
\cite{garcia2015comprehensive}, and consequently, a number of methods
have been proposed to enable RL agents to learn to solve tasks, with
due consideration given to the aspect of safety. These methods aim
to bias RL agents against such actions, broadly, by means of modifying
either the optimization criterion or the exploration process \cite{garcia2015comprehensive}.
In either case, the nature of the bias is to directly or indirectly
equip the agent with prior information regarding its domain, which
is subsequently used to enable safer learning behaviors. Safe RL approaches
where such prior knowledge is extracted from already learned tasks
in the domain share similarities with the ideology of transfer learning
\cite{taylor_transfer_2009}, in the sense that they both reuse previously
acquired knowledge to achieve a specific behavior. Perhaps the main
distinction between the two is that the former focuses on using domain-specific
knowledge to achieve safe behaviors, whereas the focus of the latter
is more generally, to reuse previously acquired task knowledge to
achieve good learning performance on a new task. Previous works \cite{fernandez_probabilistic_2006,li2018optimal,barreto2019transfer}
have explored the idea of exploiting known task knowledge for improving
learning performance, but ignore aspects relating to safety. Other
approaches which were specifically designed to enable safe exploration
\cite{achiam2017constrained,raybenchmarking,garcia2012safe} were
based on strong assumptions such as the availability of a safe baseline
policy or the explicit specification of a constraint function. Although
the idea of excluding unsafe actions during learning has been explored
in previous works \cite{alshiekh2018safe,zahavy2018learn}, they
too are reliant on explicit domain or safety specifications. In addition,
previous works that incorporate safe behaviors in RL agents have not
considered the issue of the ease of adaptation of the safe policy
in new, but related domains.

In this work, we propose an approach to learn a transferable domain
prior for safe exploration by incrementally extracting, refining and
reusing common domain knowledge from already learned policies, an
approach consistent with the ideology of continual learning \cite{ring1994continual}.
The reward function used for learning this prior is constructed by
approximating rewards from the $Q$- functions of the previously learned
tasks for state-action pairs consistently associated with undesirable
agent behaviors. Unlike other safe RL approaches, our approach does
not require the explicit specification of a safety or constraint function
to encode safe behaviors, or prior access to a safe policy. The focus
is to instead, extract knowledge from previously learned tasks to
learn a safety prior, which is subsequently used to bias an agent's
exploratory behavior while it learns arbitrary tasks in the domain.
The intuition behind this approach is that for a given domain, there
exist behaviors that are commonly undesirable for any arbitrary task
in that domain. As the prior is stored in the form of a $Q$- function,
it can be learned off-policy \cite{geist2014off}, in parallel with
an arbitrary task that the agent is learning, without the need for
additional interactions with the environment. The prior can also be
transferred or reused, and is capable of quickly adapting to other
similar environments, under the assumption that there exists a considerable
overlap in the set of undesirable actions in the two environments.
We demonstrate this claim in a simple tabular environment, while also
demonstrating the effectiveness of the proposed approach in more complex
environments with continuous states and/or continuous actions. We
also quantify the effectiveness of our approach in enabling safe exploration
in tabular domains by analytically deriving an expression that relates
the probability of executing unsafe actions using our approach, relative
to an $\epsilon$-greedy exploration strategy, for a given degree
of correctness of the learned priors. 

In summary, the main contributions of this work are: 

\begin{itemize}
\item A novel framework for learning domain priors from previously known tasks.
\item A theoretical relation between correctness of a prior and the relative probability of unsafe exploratory actions.
\item Experimental results in both discrete as well as continuous environments, validating the benefits of learning and using the described priors. 
\item Experimental results in the discrete action setting, demonstrating the transferability of the learned priors to other similar environments.
\end{itemize}

\section{Related Work}

The goal of our approach is to achieve safe exploratory actions during
the learning process by making use of existing knowledge of other
tasks in the domain, an ideology that is typical of many transfer
RL \cite{taylor_transfer_2009} frameworks. Specifically, we consider
the case where the tasks differ only in the reward functions \cite{barreto2017successor,ma2018universal}.
In one of the popular approaches \cite{fernandez_probabilistic_2006}
that addressed this case, past policies were reused based on their
similarity to the task being solved. In addition to being able to
effectively reuse past policies, the approach was also shown to be
capable of extracting a set of ``core'' policies to solve any task
in a given domain. A recent method by Li and Zhang \cite{li2018optimal}
improved this policy reuse approach by optimally selecting the source
policies online. However, these approaches, along with several others
\cite{schmitt2018kickstarting,spector2018sample} are only concerned
with the problem of reusing past policies to achieve quicker learning
in the target task, without consideration to the cost of executing
poor exploratory actions during learning. More recent works \cite{raybenchmarking,leike2017ai}
have emphasized this problem in greater detail, with accompanying environments
that demonstrate the distinction between reward-maximization behavior
and safety performance for a range of tasks. 

Most approaches that are directly concerned with achieving safe behaviors
during learning, do so by incorporating domain knowledge, and biasing
the actions of the learning agent by modifying either the optimization
criterion or the exploration process. A detailed summary of such approaches
can be found in Garcia and Fernandez \cite{garcia2015comprehensive}.
Among these, a few consider the problem of safety at the policy level
\cite{cohen2018diverse,ammar2015safe}, while others aim to improve
safety at the level of states and actions, much like the approach
described in the present work. The PI-SRL approach by Garcia and Fernandez
\cite{garcia2012safe} avoids the exploration of unsafe states by
using a known safe baseline policy, coupled with case-based reasoning.
However, the maintenance of their case-base of known states is based
on a Euclidean similarity metric, which may not be a useful measure
in many situations, and hence limits the generalizability of the approach.
Additionally, their assumption regarding the availability of a safe
baseline policy may not be reasonable in many practical circumstances.
The Lagrangian and constrained policy optimization approaches \cite{achiam2017constrained,raybenchmarking}
greatly improve safety performance. However, they require the explicit
specification of a safety performance metric or a constraint function,
which may not always be available. 

The idea of achieving safe learning behaviors by biasing against certain
actions has also been proposed in other recent work. Zahavy et al.
\cite{zahavy2018learn} proposed the approach of action elimination
deep $Q$-networks \cite{mnih2015human}, which essentially eliminates
sub-optimal actions, and performs $Q$-learning on a subset of the
state-action space. The elimination of actions is based on a binary
elimination signal which is computed using a contextual bandits framework.
Similar to this, the idea of shielding was proposed by Alshiekh et
al.\cite{alshiekh2018safe}, where unsafe actions were disallowed
based on a shielding signal. The authors synthesize the shield separately,
from a safety game between an environment and a system player. Akin
to these approaches, the basis of our approach is to bias the agent
against certain actions that are considered to be undesirable, as
per a learned prior policy. However, the key idea is to obviate the
need for domain-specific safety constraints, and instead, learn a
safety prior from a set of previously learned tasks, in an online
and off-policy manner, without the requirement of additional interactions
with the environment. 

\section{Methodology\label{sec:Methodology}}

We consider the objective of learning a prior policy $\pi_{P}$ by
learning the corresponding $Q$-function $Q_{P}$ in a domain $\mathcal{D=<S,A,T>}$,
where the tasks $\mathcal{M=<D,R>}$ share a common state-space $\mathcal{S}$,
action-space $\mathcal{A}$ and state-transition function $\mathcal{T}$,
and differ solely in the reward function $\mathcal{R}$. The purpose
of this prior is to bias the agent against exploratory actions that
have a high degree of undesirability , which we define as follows:
\begin{defn}
\label{def:undesirability}The undesirability of an action $a$ is
the absolute value of the optimal advantage $A^{*}(s,a)$ for that
action, where $A^{*}(s,a)=Q^{*}(s,a)-\underset{a'\in\mathcal{A}}{max}\thinspace Q^{*}(s,a')$.
\end{defn}
The optimal advantage function $A^{*}(s,a)$ \cite{baird1993advantage}
measures the deviation of the $Q$ -value for a particular state-action
pair $(s,a)$ from the maximum $Q$ -value associated with the state
$s$. Thus, $|A^{*}(s,a)|$ is indicative of how much worse action
$a$ is, in relation to the best action in that state. 

In order to learn $Q_{P}$, we assume that we know the optimal $Q$
-functions corresponding to $N$ arbitrary tasks in the domain $\mathcal{D}$.
For the sake of argument, let us consider the case where $N>1$, which
implies there exist at least a few tasks $\boldsymbol{M}=\{\mathcal{M}_{1}...\mathcal{M}_{i}...\mathcal{M}_{N}\}$
whose optimal $Q$- functions $\boldsymbol{Q^{*}=}\{Q_{1}^{*}...Q_{i}^{*}...Q_{N}^{*}\}$
are known. In the proposed approach, $Q_{P}$ corresponds to a pseudo-task
$\mathcal{\mathcal{M_{P}}=<D,R_{P}>}$ that is learned off-policy
by sampling state-action pairs in the given domain, for example, by
executing random exploratory actions in the environment. More practically,
they are sampled as per a behavior policy $\pi_{B}$ corresponding
to an arbitrary task $\mathcal{\mathcal{M_{\text{\ensuremath{\Omega}}}}=<D,R_{\text{\ensuremath{\Omega}}}>}$
, that is being learned in parallel. Although in general, any off-policy
approach could be used to learn $Q_{P}$, for simplicity, here, we
show the learning of $Q_{P}$ using $Q$-learning \cite{watkins1989learningfrom}. 

The basis of our approach is to construct the pseudo-reward function
$\mathcal{R_{P}}$ based on state-action pairs that are consistently
undesirable across the $N$ known tasks. We infer rewards that would
likely be associated with such state-action pairs and subsequently
construct $\mathcal{R_{P}}$ as a weighted sum of these inferred rewards.
Once $\mathcal{R_{P}}$ is constructed, $Q_{P}$ is learned off-policy,
and is subsequently used to bias the exploratory actions of the agent.
Corresponding to this description, our methodology is composed of
the following steps:

\subsection{Identification of Suitable State-Action Pairs}

The first step in our approach is to identify state-action pairs that
are consistently associated with undesirable agent behaviors. Once
a state-action pair $(s,a)$ has been sampled using the agent's behavior
policy $\pi_{B}$, for each task $\mathcal{M}_{i}$ of the $N$ known
tasks, we measure the undesirability $w_{i}(s,a)$ of the action as
a quantity proportional to the action's undesirability, as per Definition
\ref{def:undesirability}. In order to scale these values to be $\leq1$
, we measure the scaled undesirability $w_{i}(s,a)$ as: 
\begin{equation}
w_{i}(s,a)=\left|\frac{A_{i}^{*}(s,a)}{\underset{a'\in\mathcal{A}}{max}\thinspace Q_{i}^{*}(s,a')}\right|\label{eq:weighted_adv}
\end{equation}
We repeat this procedure for each of the $N$ tasks, and store the
obtained measures in a sequence $W(s,a)$ as follows:

\begin{equation}
W(s,a)=\left\{ w_{1}(s,a),...w_{i}(s,a),...w_{N}(s,a)\right\} 
\end{equation}


The overall consensus on the undesirability of action $a$ in state
$s$, as per the $N$ known tasks can then be measured by quantifying
the consistency in the values stored in $W(s,a)$. We do this by converting
$W(s,a)$ into a probability distribution $W^{'}(s,a)$ and then measuring
the normalized entropy $\mathcal{H}(W^{'}(s,a))$ associated with
it:

\begin{equation}
\mathcal{H}(W'(s,a))=-\frac{\sum_{i=1}^{N}w'_{i}(s,a)\thinspace log(w_{i}^{'}(s,a))}{log(N)}\label{eq:entropyeqn}
\end{equation}

\noindent where $W^{'}(s,a)=\{w_{1}^{'}(s,a),...w_{i}^{'}(s,a),...w_{N}^{'}(s,a)\}$,
and $w_{i}^{'}(s,a)$, the $i^{th}$ element of $W^{'}(s,a)$, is
computed using the softmax function:
\begin{equation}
w_{i}^{'}(s,a)=\frac{e^{w_{i}(s,a)}}{\sum_{i=1}^{N}e^{w_{i}(s,a)}}\label{eq:wdash}
\end{equation}


In order to construct the pseudo-reward function $\mathcal{R_{P}}$,
we select state-action pairs which are associated with high values
of $w_{i}(s,a)$, as well as a high normalized entropy value $\mathcal{H}(W^{'}(s,a))$.
The former criterion, quantified by the mean $\mu(W(s,a))=\frac{\sum_{i=1}^{N}w_{i}(s,a)}{N}$
of the values in $W(s,a)$, prioritizes state-action pairs that are
highly undesirable. The latter criterion $\mathcal{H}(W^{'}(s,a))$
quantifies the consistency of the undesirability of the state-action
pair across the known tasks. To account for both these criteria, we
use a threshold $t$, and select state-action pairs for which: 
\begin{equation}
\mathcal{H}(W'(s,a))*\mu(W(s,a))>t\label{eq:selection}
\end{equation}


The general idea is to select state-action pairs associated with highly
and consistently undesirable behaviors across the known tasks in the
domain. The selection of state-action pairs using Equation \ref{eq:selection}
depends heavily on the choice of a suitable threshold value $t$,
for which a rough guideline can be obtained by considering the ranges
of $\mathcal{H}(W'(s,a))$ and $\mu(W(s,a))$. $H(W'(s,a))$ lies
in the range $[0,1]$, while the range of $\mu(W(s,a))$ depends on
that of the function $\left|\frac{A^{*}(s,a)}{\underset{a'\in\mathcal{A}}{max}\thinspace Q^{*}(s,a')}\right|$,
or equivalently, using Definition \ref{def:undesirability}, $\left|\frac{Q^{*}(s,a)-\underset{a'\in\mathcal{A}}{max}\thinspace Q^{*}(s,a')}{\underset{a'\in\mathcal{A}}{max}\thinspace Q^{*}(s,a')}\right|$.
The minimum value of this function is $0$, which corresponds to the
case when $\underset{a'\in\mathcal{A}}{a=argmax}\thinspace Q^{*}(s,a')$.
The maximum value corresponds to the case when $Q^{*}(s,a)$ is as
low as possible, and $\underset{a'\in\mathcal{A}}{max}\thinspace Q^{*}(s,a')$
is as large as possible. If $r_{min}$ and $r_{max}$ represent the
lowest and highest possible rewards in the domain, then using the
lower and upper bounds of $\frac{r_{min}}{1-\gamma}$ and $\frac{r_{max}}{1-\gamma}$
for the $Q$- function, the maximum possible value of $\left|\frac{A_{i}^{*}(s,a)}{\underset{a'\in\mathcal{A}}{max}\thinspace Q_{i}^{*}(s,a')}\right|$
would be: $\left|\frac{r_{min}-r_{max}}{r_{max}}\right|$. Hence,
threshold $t$ must be selected to be in the range $[0,\left|\frac{r_{min}-r_{max}}{r_{max}}\right|]$.
In general, a lower threshold value results in a larger number of
state-action pairs being selected for the construction of $\mathcal{R_{P}}$,
possibly leading to a more conservative prior.

\subsection{Constructing Pseudo-rewards and Learning $Q_{P}$\label{subsec:Constructing-pseudo-rewards-and}}

The next step is to use the identified state-action pairs to construct
a safety prior. Consider an arbitrary task $\mathcal{M}$ in the domain
for which the policy is learned using $Q$- learning. The corresponding
standard update equation is given by:

\begin{equation}
Q(s,\negthinspace a)\leftarrow Q(s,\negthinspace a)+\alpha[r(s,a,s')+\gamma\thinspace\underset{a'\in\mathcal{A}}{max\thinspace}Q(s',\negthinspace a')-Q(s,\negthinspace a)]\label{eq:qlearning}
\end{equation}

Here, $s$ and $a$ represent the current state and action, $\gamma$
is the discount factor $(0\leq\gamma\leq1)$, $s'$ is the next state,
and $r(s,a,s')$ is the reward associated with the transition.

When the optimal $Q$-function $Q^{*}$ is learned, the temporal difference
(TD) error: $[r(s,a,s')+\gamma\thinspace\underset{a'\in\mathcal{A}}{max}\thinspace Q^{*}(s',a')-Q^{*}(s,a)]$
would reduce to $0$. Using this fact, we can infer the original reward
$r(s,a,s')$ associated with the transition:

\begin{equation}
r(s,a,s')=Q^{*}(s,a)-\gamma\thinspace\underset{a'\in\mathcal{A}}{max}\thinspace Q(s',a')\label{eq:infer_reward}
\end{equation}

In reality, the above equality seldom holds, as the TD error may not
be exactly $0$. However, the inferred reward may still be a reasonable
approximation if the $Q$-function is close to optimal ($Q\thickapprox Q^{*}$).
With this assumption in mind, we apply Equation \ref{eq:infer_reward}
to each of the known tasks, and construct the rewards associated with
those state-action pairs $(s_{c},a_{c})$ which satisfy the condition
in Equation \ref{eq:selection}. The pseudo-reward $r_{P}$ is computed
as a sum of these inferred rewards, weighted by the corresponding
elements of $W'(s_{c},a_{c})$:

\begin{equation}
r_{p}(\negthinspace s_{c},\negthinspace a_{c},\negthinspace s_{c}')\negthinspace=\negthinspace\sum_{i=1}^{N}w'_{i}(s_{c},\negthinspace a_{c})[Q_{i}^{*}(s_{c},\negthinspace a_{c})-\gamma\thinspace\underset{a'\in\mathcal{A}}{max}\thinspace Q_{i}^{*}(s_{c}',\negthinspace a')]\label{eq:weightedrewards}
\end{equation}

$r_{P}$ is capped to have a maximum absolute value of $1$, and for
state-action pairs that do not satisfy Equation \ref{eq:selection},
$r_{P}$ is set to a default value of $0$. $r_{P}$ is then used
to update the $Q$- function $Q_{P}$ via the standard $Q$- learning
update equation (Equation \ref{eq:qlearning}). By continuously sampling
state-action pairs, determining the corresponding pseudo-reward $r_{P}$
and updating $Q_{P}$, the optimal $Q$- function $Q_{P}^{*}$, is
learned. It is worth mentioning that $Q_{P}$ is updated using what
ever state-action pairs are sampled by the behavior policy $\pi_{B}$.
Hence, no additional interactions with the environment are required
for its computation. However, learning $Q_{P}^{*}$ is subject to
the condition that $\pi_{B}$ sufficiently explores the state-action
space. The additional requirements for learning a prior policy are
the additional memory and computations corresponding to inferring
$r_{P}$, and storing and updating $Q_{P}$. The overall process of
updating $Q_{P}$ is summarised in Algorithm \ref{alg:algorithm1}.

\begin{algorithm}[h]
\caption{Algorithm for updating prior $Q$-function $Q_{P}$}
\begin{algorithmic}[1]

\STATE \textbf{Input: }

\STATE Set of $N$ optimal $Q$- functions $\boldsymbol{Q^{*}=}\{Q_{1}^{*}...Q_{i}^{*}...Q_{N}^{*}\}$,
Estimate of prior $Q$-function $Q_{P}$, maximum number of steps
per episode $H$, behavior policy $\pi_{B}$, threshold $t$

\STATE \textbf{Output: }updated estimate of $Q_{P}$\textbf{ }

\FOR { H steps}

\STATE Execute behavior policy $\pi_{B}$ to take action $a$ from
state $s$, and obtain next state\textbf{ $s'$}

\STATE Initialize $W(s,a)$ as an empty set

\FOR { each task i of the N known tasks}

\STATE Compute $A_{i}^{*}(s,a)=Q_{i}^{*}(s,a)-\underset{a'\in\mathcal{A}}{max}\thinspace Q_{i}^{*}(s,a')$

\STATE $w_{i}(s,a)=\left|\frac{A_{i}^{*}(s,a)}{\underset{a'\in\mathcal{A}}{max}\thinspace Q_{i}^{*}(s,a')}\right|$

\STATE $W(s,a)=W(s,a)\cup w_{i}(s,a)$

\ENDFOR 

\STATE Normalize $W(s,a)$ using Equation \ref{eq:wdash} to obtain
$W^{'}(s,a)=\{w_{1}^{'}(s,a)...w_{i}^{'}(s,a)...w_{N}^{'}(s,a)\}$ 

\STATE Compute $\mathcal{H}(W^{'}(s,a))$ (Equation \ref{eq:entropyeqn})

\STATE Compute $\mu(W(s,a))=\frac{1}{N}\sum_{i=1}^{N}w_{i}(s,a)$

\STATE Initialize pseudo-reward $r_{P}(s,a,s')$ as $0$

\IF{$\mu(W(s,a))*\mathcal{H}(W^{'}(s,a))>t$ (threshold)}

\STATE${r_{P}(s,a,s')}$\\${=\sum_{i=1}^{N}w'_{i}(s,a)[Q_{i}^{*}(s,a)-\gamma\thinspace\underset{a'\in\mathcal{A}}{max}\thinspace Q_{i}^{*}(s',a')]}$ 

\ENDIF

\STATE${Q_{P}(s,a)\longleftarrow Q_{P}(s,a)}$ ${+\alpha[r_{P}(s,a,s')+\gamma \underset{a'\in\mathcal{A}}{max}Q_{P}(s',a')-Q_{P}(s,a)]}$ 

\ENDFOR 

\end{algorithmic}

\label{alg:algorithm1}
\end{algorithm}

\subsection{Biasing Exploration Using $Q_{P}^{*}$}

Following the construction of the domain priors, the final step is
to use these priors to bias the exploratory behavior of the agent.
$Q_{P}^{*}$ is learned based on a reward function $\mathcal{R_{P}}$,
which is specifically constructed using state-action pairs that are
consistently associated with undesirable actions. Hence, in order
to avoid catastrophic actions during learning, we simply bias the
agent's behavior against taking undesirable actions, as determined
by $Q_{P}^{*}$. If such an action happens to be suggested by the
agent during learning, with a high probability $\rho$, we disallow
it from being executed, and force the agent to pick an alternative
action whose $Q_{P}^{*}$ value is at least equal to the mean value
of $Q_{P}^{*}$ over all actions. The threshold of $\underset{a'\in\mathcal{A}}{mean}{Q_{P}^{*}}(s,a')$
was chosen, simply to ensure that better-than-average actions are
executed during exploration. More conservative (higher) or radical
(lower) threshold values could also be considered, although it must
be noted that choosing a very high threshold would limit the extent
of exploration, while a very low threshold would fail to leverage
the safe exploratory behaviors enabled by $Q_{P}^{*}$. Algorithm
\ref{alg:algorithm2} outlines the process of biasing the agent against
undesirable exploratory actions.

\begin{algorithm}[h]
\caption{Biasing against undesirable exploration}
\begin{algorithmic}[1]

\STATE \textbf{Input: }

\STATE Proposed exploratory action $a_{0}$, state $s$, optimal
$Q$- function of prior $Q_{P}^{*}$, probability of using priors
$\rho$

\STATE \textbf{Output: }selected action $a$\textbf{ }

\STATE With a probability $\rho$:\textbf{ }

\WHILE{${Q^*_P}(s,a)<\underset{a'\in\mathcal{A}}{mean}{Q^*_P}(s,a')$}

\STATE Pick random action from $\mathcal{A}:$ $a_{0}=random(\mathcal{A})$ 

\ENDWHILE

\STATE$a=a_{0}$

\end{algorithmic}

\label{alg:algorithm2}
\end{algorithm}

\section{Theoretical Analysis}

Biasing the exploratory actions as described would, in an ideal case,
help avoid unsafe actions. However, the effectiveness of using the
learned priors to bias against these actions is highly dependent on
how correct the priors are. In this section, we consider the discrete
actions setting, and derive a relation between the correctness of
a prior and the probability of taking unsafe actions using our approach,
relative to an $\epsilon$-greedy exploration policy. We first define
the terms `unsafe actions' and `correctness of a prior' for the purpose
of our analysis, as follows:
\begin{defn}
\label{def:unsafe_actions}An action $a$ is considered unsafe in
a state $s$ if ${\ensuremath{{Q_{P}^{*}}(s,a)<\underset{a'\in\mathcal{A}}{mean}{Q_{P}^{*}}(s,a')}}$
in that state.
\end{defn}
Definition \ref{def:unsafe_actions} was chosen to be consistent with
the biasing criteria used in Algorithm \ref{alg:algorithm2}. 
\begin{defn}
The correctness $C_{Q_{P},\mathcal{D}}$ of a prior $Q_{P}$, with
respect to a domain $\mathcal{D}$ is the probability with which it
avoids deeming an action to be safe, when it is actually unsafe.

$C_{Q_{P},\mathcal{D}}=1-\frac{n_{FN}}{n_{I}-n_{FP}+n_{FN}}$
\end{defn}
where $n_{FP}$ and $n_{FN}$ are respectively the number of false
positives (cases where the action has been incorrectly classified
by $Q_{P}$ as unsafe) and false negatives (cases where the action
has been incorrectly classified by $Q_{P}$ as safe), and $n_{I}$
is the number of unsafe actions identified by $Q_{P}$. It is worth
noting that only the false negative cases affect the probability of
encountering truly unsafe actions. The effect of false positives would
be to simply slow down learning. The extent to which the correctness
$C_{Q_{P},\mathcal{D}}$ affects the probability of encountering unsafe
actions, relative to the case of $\epsilon$-greedy exploration, is
presented in the following theorem:
\begin{thm}
If a prior $Q_{P}$ with a correctness of $C_{Q_{P},\mathcal{D}}$,
is used to bias the exploratory actions with a probability of $\rho$,
then relative to the case of standard $\epsilon$-greedy exploration,
the probability of executing unsafe exploratory actions in a given
state is reduced by a factor of $1-\frac{\rho(|\mathcal{A}|C_{Q_{P,\mathcal{\mathcal{D}}}}-U)}{|\mathcal{A}|-U}$,
where $\mathcal{A}$ is the action space associated with the domain,
and $U$ is the number of unsafe actions associated with that state.
\end{thm}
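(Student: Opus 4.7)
The plan is to directly compute, for a fixed state $s$, the probability that the biased scheme of Algorithm \ref{alg:algorithm2} emits a truly unsafe action, and to divide it by the corresponding probability under $\epsilon$-greedy exploration. First I would note that the $\epsilon$-greedy baseline draws exploratory actions uniformly from $\mathcal{A}$, so in a state with $U$ truly unsafe actions the probability of picking one is simply $U/|\mathcal{A}|$.

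Next I would decompose Algorithm \ref{alg:algorithm2} by conditioning on whether the prior is consulted. With probability $1-\rho$ the proposed action is accepted unmodified, contributing $(1-\rho)\,U/|\mathcal{A}|$ to the total. With probability $\rho$ the while-loop rejects every draw that $Q_{P}^{*}$ marks as unsafe (i.e.\ $Q_{P}^{*}(s,a)<\underset{a'\in\mathcal{A}}{\mathrm{mean}}\,Q_{P}^{*}(s,a')$), so the resulting action is uniform over the set of actions that the prior deems safe in $s$. The conditional probability of an unsafe action in that branch equals the number of false negatives in $s$ divided by the size of this safe-by-prior pool.

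The crucial translation step is to convert the global correctness $C_{Q_{P},\mathcal{D}}$ into state-level counts. Since $C_{Q_{P},\mathcal{D}}$ is (true positives)/(truly unsafe), I would argue that, applied to state $s$, a $C$-fraction of the $U$ truly unsafe actions are caught, leaving $U(1-C_{Q_{P},\mathcal{D}})$ false negatives. Under the natural balancing assumption that the false-positive and false-negative counts in $s$ cancel (equivalently, that $Q_{P}^{*}$ labels exactly $U$ actions as unsafe in $s$), the safe-by-prior pool has size $|\mathcal{A}|-U$. Substituting gives a conditional unsafe probability of $U(1-C_{Q_{P},\mathcal{D}})/(|\mathcal{A}|-U)$ in the biased branch, so that
\begin{equation*}
P_{\text{unsafe}}^{\text{new}}=(1-\rho)\,\frac{U}{|\mathcal{A}|}+\rho\,\frac{U(1-C_{Q_{P},\mathcal{D}})}{|\mathcal{A}|-U}.
\end{equation*}
Dividing by $U/|\mathcal{A}|$ and simplifying algebraically yields the claimed multiplicative factor $1-\frac{\rho(|\mathcal{A}|C_{Q_{P},\mathcal{D}}-U)}{|\mathcal{A}|-U}$.

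I expect the main obstacle to be precisely the third step: the definition of $C_{Q_{P},\mathcal{D}}$ is a domain-wide aggregate, whereas the theorem is a pointwise statement about a single state $s$ with its own $U$. A fully rigorous argument needs either a uniformity assumption so that the state-local recall equals $C_{Q_{P},\mathcal{D}}$, or an averaging/expectation interpretation of the stated probability. The remaining algebra is routine arithmetic; identifying and stating this mild assumption cleanly is the part that requires care.
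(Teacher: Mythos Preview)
Your proposal is correct and mirrors the paper's own argument almost exactly: the paper also conditions on the $\rho$/$(1-\rho)$ branches, writes $p_{\text{priors}}=\frac{\epsilon U\rho(1-C_{Q_{P}})}{|\mathcal{A}|-U}+\frac{\epsilon U(1-\rho)}{|\mathcal{A}|}$ and $p_{\epsilon\text{-greedy}}=\frac{\epsilon U}{|\mathcal{A}|}$, and simplifies the ratio. If anything you are more careful than the paper, which silently assumes both the state-local recall interpretation of $C_{Q_{P},\mathcal{D}}$ and that the prior flags exactly $U$ actions (your $n_{FP}=n_{FN}$ balancing), so the ``obstacle'' you anticipate is simply left implicit in the original.
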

\begin{proof}
For the case of standard $\epsilon-greedy$ exploration, the agent
takes exploratory actions with a probability of $\epsilon$, in each
instance of which, the probability of picking an unsafe action is
$\frac{U}{|\mathcal{A}|}$. Hence, the probability of unsafe exploratory
actions for an $\epsilon-greedy$ strategy is: $p_{\epsilon-greedy}=\frac{\epsilon U}{|\mathcal{A}|}$

Now, in the case of biased exploration, exploratory actions occur
with a probability of $\epsilon,$ and are biased using the priors,
with a probability $\rho$. When the bias is used, the agent eliminates
unsafe actions (as determined by $Q_{P}$), and uniformly and randomly
selects from the remaining $|\mathcal{A}|-U$ actions. However, the
selected action may still be unsafe due to the presence of false negatives,
which occur with a probability of $1-C_{Q_{P},\mathcal{D}}$. With
the remaining probability of $(1-\rho)$, exploration occurs exactly
as in the $\epsilon-greedy$ case. Hence, the total probability of
unsafe actions occuring during exploring is: $p_{priors}=\frac{\epsilon U\rho(1-C_{Q_{P}})}{|\mathcal{A}|-U}+\frac{\epsilon U(1-\rho)}{|\mathcal{A}|}$.
The ratio $\frac{p_{priors}}{p_{\epsilon-greedy}}$ can then be simplified
to: $1-\frac{\rho(|\mathcal{A}|C_{Q_{P}}-U)}{|\mathcal{A}|-U}$
\end{proof}
This implies that fewer unsafe actions can be expected when $C_{Q_{P},\mathcal{D}}$
and $\rho$ have values close to $1$. Although a large value of $\rho$
is favorable, in order to maintain a non-zero probability of visiting
every state-action pair (and thus ensure convergence), it is set to
be slightly lesser than $1$. For the purpose of this analysis, we
only considered environments with discrete actions. However, in practice,
our approach was also used to bias exploration in continuous action
environments in Section \ref{subsec:AI-Safety-Gym}. This was done
by randomly sampling a large number of actions from a uniform distribution,
and applying the exploration bias on this set of discretized actions. 

\section{Results}

\subsection*{Benchmark Environments and Baselines}

In order to test the learning and safety performance of the described
approach, we chose three different environments. The first is a classical
navigation environment shown in Figure \ref{fig:environments}(a),
first introduced by Fernandez and Veloso \cite{fernandez_probabilistic_2006},
where the state and action spaces are discrete. For this tabular environment,
we use OPS-TL\cite{li2018optimal}, PRQL\cite{fernandez_probabilistic_2006},
PI-SRL\cite{garcia2012safe} and $Q$- learning\cite{watkins1989learningfrom}
as baselines for performance comparison. 

Next, we show the agent's performance in a safety grid world `Island
Navigation' environment, shown in Figure \ref{fig:environments}(b),
which was first introduced by Leike et al. \cite{leike2017ai} as
a benchmark designed to evaluate safe exploration performance. The
choice of baselines for this environment was A2C\cite{mnih2016asynchronous},
SARSA\cite{sutton2011reinforcement} and DQN\cite{mnih2015human}. 

Lastly, we demonstrate the performance of our approach on a safe exploration
task, shown in Figure \ref{fig:environments}(c), in the `Safety Gym'
environment, a continuous action environment recently introduced by
Ray et al. \cite{raybenchmarking}. For this environment, the chosen
baselines were PPO\cite{schulman2017proximal}, PPO-Lagrangian (a
version of PPO with explicit constraints\cite{raybenchmarking})
and DDPG\cite{lillicrap2015continuous}.

We chose to validate our approach using these selected environments,
as typical RL tasks in environments such as Atari \cite{mnih2013playing}
or OpenAI Gym \cite{brockman2016openai} are set up largely with
a focus on learning performance, without much consideration given
to aspects relating to safety.

\begin{figure*}[t]
\begin{centering}
\includegraphics[scale=0.25]{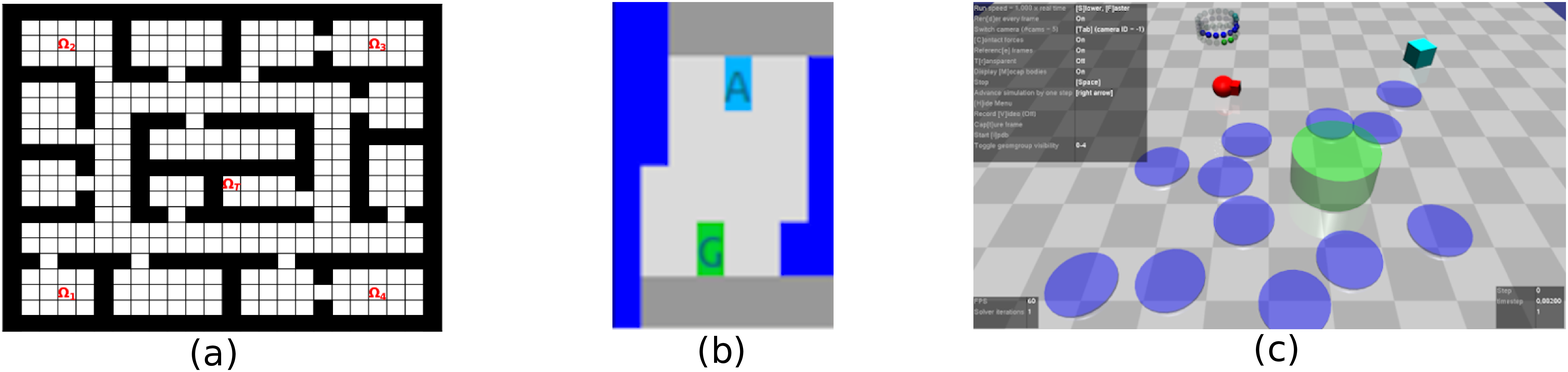}
\par\end{centering}
\caption{(a) shows the classical navigation environment, with goal locations
$\text{\ensuremath{\Omega}}_{1},\text{\ensuremath{\Omega}}_{2},\text{\ensuremath{\Omega}}_{3},\text{\ensuremath{\Omega}}_{4}$
of the known tasks, and goal location $\text{\ensuremath{\Omega}}_{\text{{T}}}$
of the task to be learned. (b) shows the agent `A', the goal `G' and
the `water' locations in blue in the island navigation environment
and (c) shows a task in the Safety Gym PointGoal environment, where
the green area is the navigation target, and the purple areas represent
hazards which need to be avoided by the agent (in red).}
\label{fig:environments}
\end{figure*}

\subsection{Classical Navigation Environment\label{subsec:Classical-Navigation-Environment}}

We first demonstrate extensive results from our approach on a classical
$21\times24$ grid-world navigation environment shown in Figure \ref{fig:environments}(a),
before proceeding to more complex and continuous environments in Sections
\ref{subsec:AI-Safety-Gridworld} and \ref{subsec:AI-Safety-Gym}.
The environment settings are consistent with those reported in \cite{fernandez_probabilistic_2006}.
Here, each state is represented by a $1\times1$ grid cell, with darker
colored cells representing obstacles, and other cells representing
free positions. The agent's state is represented by its $(x,y)$ coordinates,
and at each state, it is allowed to take one of four actions - moving
up, down, left or right. Following the execution of an action, the
agent moves to a new state, which is noised by random values sampled
from a uniform distribution in the range (-0.2,0.2). 

When the agent executes an action that causes it to bump into an obstacle,
it retains its original state, without moving and receives a reward
of $-1$. Goal states are terminal, and transitions leading into them
are associated with a reward of $1$. For all other transitions, the
agent receives a small negative reward of $-0.1$. This penalises
behaviors such as moving back and forth between two non-goal states. 

For each task, the agent is allowed to interact with the environment
for $K$ episodes. Each episode starts with the agent in a random,
non-goal state, following which, it could execute upto $H$ actions
to try and reach the terminal goal state. The performance $W$ of
the agent is evaluated by computing the discounted sum of rewards
per episode as follows:

\begin{equation}
W=\frac{1}{K}\sum_{k=0}^{K}\sum_{h=0}^{H}\gamma^{h}r_{k,h}\label{eq:performancemeasure}
\end{equation}

\noindent where $r_{k,h}$ is the reward received from the environment
at step $h$ of episode $k$. We use the same metric to evaluate the
performances in the continuous environments.

In order to obtain source policies, the agent is initially trained
to learn the tasks $\mathcal{M}_{\text{\ensuremath{\Omega}}_{1}},\mathcal{M}_{\text{\ensuremath{\Omega}}_{2}},\mathcal{M}_{\text{\ensuremath{\Omega}}_{3}}$
and $\mathcal{M}_{\text{\ensuremath{\Omega}}_{4}}$ , corresponding
to the navigation target locations $\Omega_{1},\Omega_{2},\Omega_{3}$
and $\Omega_{4}$. The label $\text{\ensuremath{\Omega}}_{\text{\text{{T}}}}$
in Figure \ref{fig:environments}(a) marks the goal location of the
target task $\mathcal{M}_{\text{\ensuremath{\Omega}}_{\text{{T}}}}$,
which the agent aims to learn. 

The prior is learned using the optimal $Q$-functions of tasks $\mathcal{M}_{\text{\ensuremath{\Omega}}_{1}},\mathcal{M}_{\text{\ensuremath{\Omega}}_{2}},\mathcal{M}_{\text{\ensuremath{\Omega}}_{3}}$
and $\mathcal{M}_{\text{\ensuremath{\Omega}}_{4}}$, as described
in Algorithm \ref{alg:algorithm1}. Figure \ref{fig:worstactions_orig}
depicts the set of consistently undesirable actions identified using
these known tasks, which is then used for learning the prior $Q_{P}$.
The red, green, blue and orange arrows represent actions that move
the agent up, right, down and left respectively. As observed in Figure
\ref{fig:worstactions_orig}, most of the identified actions correspond
to those that would cause collisions with obstacles in the environment.
The task $\mathcal{M}_{\text{\ensuremath{\Omega}}_{\text{{T}}}}$
is then learned by biasing the exploratory actions of the agent using
the learned prior, as described in Algorithm \ref{alg:algorithm2}.

\begin{figure}[H]
\centering{}\includegraphics[width=0.6\columnwidth]{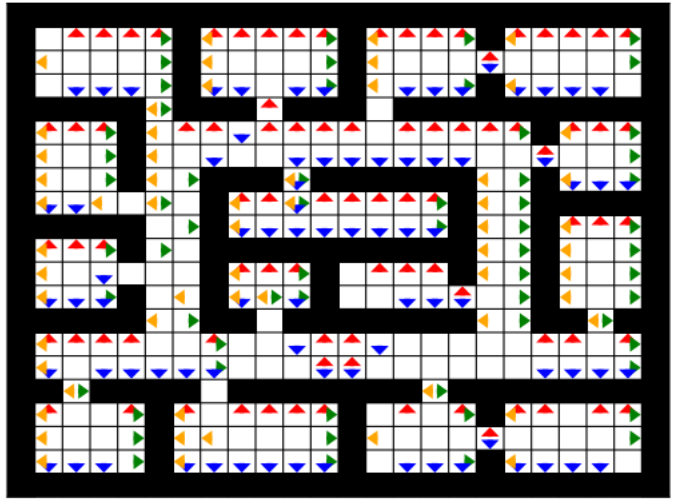}\caption{Identified set of consistently undesirable actions extracted from
known tasks for the environment in Figure \ref{fig:environments}(a).}
\label{fig:worstactions_orig}
\end{figure}

Figure \ref{fig:performance_learning} shows the average performance
over $10$ trials, of different algorithms, evaluated using Equation
\ref{eq:performancemeasure}. The shaded regions represent the standard
errors of the mean performances for the $10$ trials. The common learning
parameters were set as follows: $\alpha=0.05,\gamma=0.95,H=500,K=2000$,
and the probability of exploration $\epsilon$ was set to be decaying
from an initial value of $1$, as in \cite{fernandez_probabilistic_2006}.
Two of the performance curves in Figures \ref{fig:performance_learning}
and \ref{fig:performance_failures} were obtained by combining the
described approach with: a) standard $Q$-learning \cite{watkins1989learningfrom},
and b) PRQ-learning (PRQL) \cite{fernandez_probabilistic_2006}$(\psi=1,\nu=0.95)$.
The parameters specific to our approach were chosen to be: $t=0.35,\rho=0.95$.
As observed from the figure, these curves exhibit a superior learning
and safety performance compared to their corresponding counterparts,
in which the learning occurs without the use of domain priors. In
particular, the use of learned priors enables a significant increase
in the initial performance of the agent, due to fewer unsafe exploratory
actions during the initial phases of learning. This is supported by
the results in Figure \ref{fig:performance_failures}, which depicts
the trend in the number of obstacle collisions per episode in each
of the tested approaches. The overall performance of the agent is
also superior to that of other approaches such as OPS-TL \cite{li2018optimal}$(c=0.0049)$
for selecting source tasks, and the PI-SRL approach \cite{garcia2012safe}
$(k=6,\sigma=0.5)$, in which safe exploratory actions are chosen
based on case-based reasoning. Although the latter approach has a
marginally better initial performance as seen in Figure \ref{fig:performance_learning},
the learned policy is very conservative, as indicated by the negligible
improvement in its performance across the episodes. From these figures,
it is evident that the use of domain priors brings about improvements
in both safety as well as learning performance.

\begin{figure}[H]
\centering{}\includegraphics[width=0.8\columnwidth]{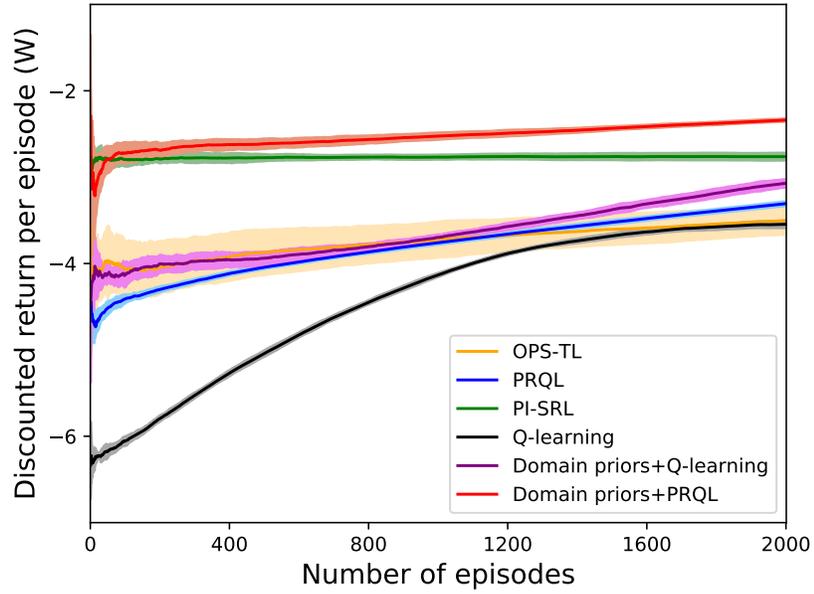}\caption{The average discounted returns per episode ($W$), computed over $10$
trials, for different learning methods in the classical navigation
environment.}
\label{fig:performance_learning}
\end{figure}

\begin{figure}[H]
\centering{}\includegraphics[width=0.8\columnwidth]{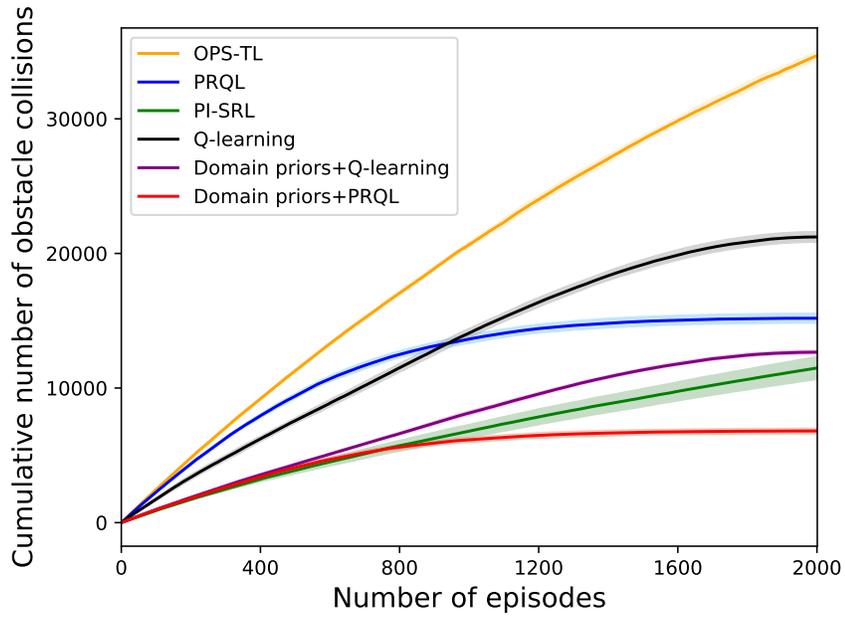}\caption{The cumulative number of obstacle collisions, computed over $10$
trials, for different learning methods in the classical navigation
environment.}
\label{fig:performance_failures}
\end{figure}

\subsection{Continuous State Environment\label{subsec:AI-Safety-Gridworld}}

The results from Section \ref{subsec:Classical-Navigation-Environment}
demonstrate the effectiveness of the proposed method in simple tabular
domains. Although the nature of the task in the non-tabular `Island
Navigation' domain \cite{leike2017ai} considered in this section
is roughly similar to that in Section \ref{subsec:Classical-Navigation-Environment},
there exists a fundamental difference between the two, in that the
states are now represented using features. The goal in this environment
is for the agent to navigate to the target location using a set of
discrete actions (moving left,right,up and down) without stepping
into the `water' locations. In order to obtain the source policies
to construct the priors, we first solved a set of $4$ random tasks
using Deep $Q$-learning (DQN) \cite{mnih2015human} by randomly
generating the target locations. Consistent with the implementation
in Leike at al. \cite{leike2017ai}, both the A2C as well as the
DQN implementations used a $2$ layered multi-layer perceptrons with
$100$ nodes each, trained with inputs that consisted of a matrix
encoding the current configuration of the environment. The architecture
for SARSA was kept identical to that for DQN, and varied only in
the value function update rule. For A2C, we used an entropy penalty
parameter of $0.05$, which linearly decayed to $0$ at the end of
each trial. For optimization, we used Adam\cite{kingma2014adam}
with a learning rate of $5e-4$ and a batch size of
$64$. For each task, the agent was trained for $2000$ episodes,
each consisting of up to $100$ steps. The other parameters used were:
a discount factor of $0.99$, an initial exploration parameter of
$1$, which decayed exponentially to a minimum of $0.1$ (with a decay
factor of 0.95), a replay buffer of size $2000$, a threshold $t=0.25$
and $\rho=0.95$.

Using the obtained source policies, we implemented our approach described
in Section \ref{sec:Methodology}, and tested the performance of the
agent on a new task, while its exploration was biased using the learned
priors, as described in Algorithm \ref{alg:algorithm2}. Figures \ref{fig:performance_learning-1}
and \ref{fig:performance_failures-1} depict the performance of various
approaches, averaged over $15$ trials. As observed, our method of
biasing the exploration using the learned priors was able to improve
the agent's learning performance, while simultaneously achieving a
fewer visits to the `water' locations, thereby also improving the
safety performance.

\begin{figure}[H]
\centering{}\includegraphics[width=0.8\columnwidth]{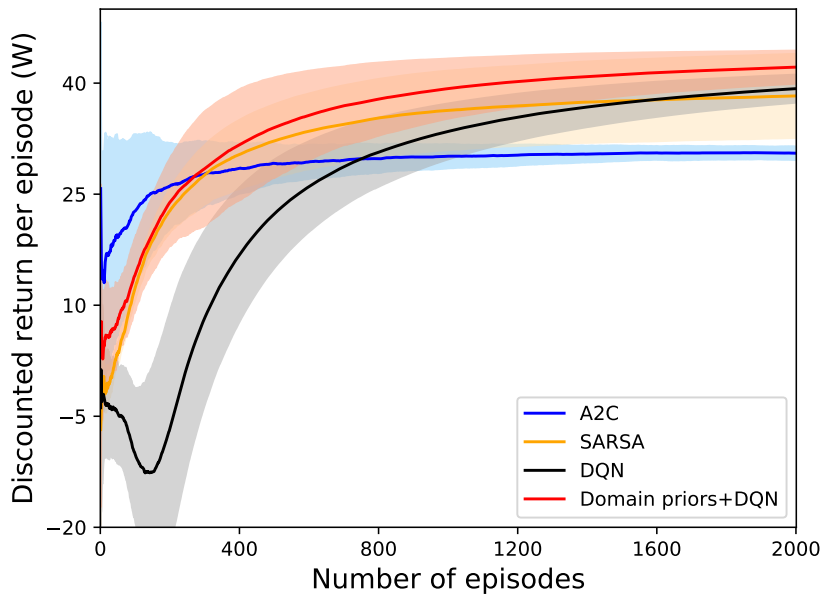}\caption{The average discounted returns per episode ($W$), computed over $15$
trials, for different learning methods in the island navigation environment.}
\label{fig:performance_learning-1}
\end{figure}

\begin{figure}[H]
\centering{}\includegraphics[width=0.8\columnwidth]{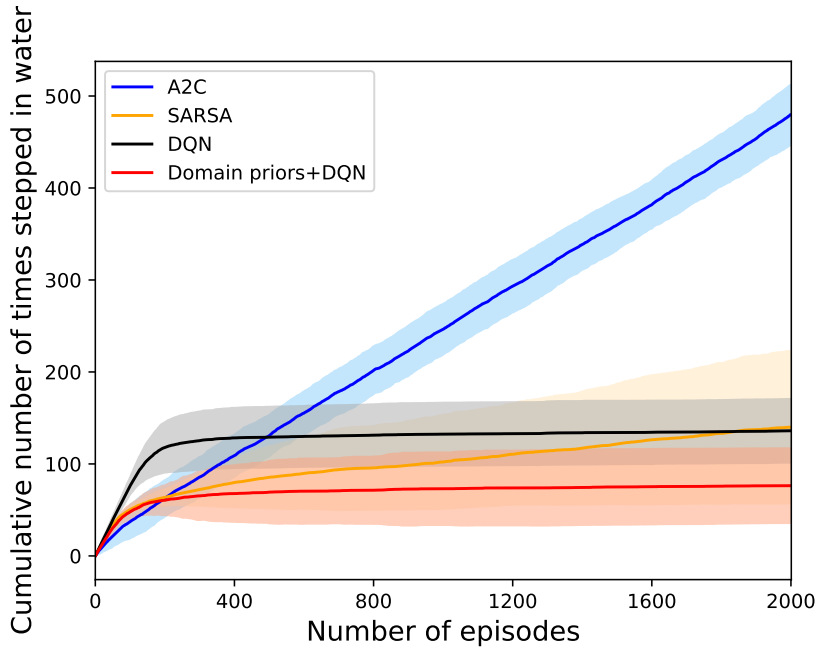}\caption{The cumulative number of times stepped in water, computed over $15$
trials, for different learning methods in the island navigation environment.}
\label{fig:performance_failures-1}
\end{figure}

Our method for learning domain priors naturally scales to such non-tabular
environments, fundamentally because the process of inferring $r_{P}$
(Equation \ref{eq:weightedrewards}) does not explicitly depend on
the state complexity, and only depends on the $Q$-values of the $N$
tasks for the specific transition $(\negthinspace s_{c},\negthinspace a_{c},\negthinspace s_{c}')$
under consideration. This can be obtained with a maximum of $N|\mathcal{A}|$
queries to the stored $Q$-networks, which depends only on $|\mathcal{A}|$
and $N$, and is independent of the size of the state space. 

\subsection{Continuous Action Environment\label{subsec:AI-Safety-Gym}}

The `Safety Gym' \cite{raybenchmarking} environment consists of
both continuous states and actions. To implement our approach in such
as setting, we chose a version of the PointGoal1 environment, `PointGoal1-12',
where the number of `hazards' were set to $12$, making it a more
unsafe environment than the original PointGoal1 environment. The aim
of the agent in this environment is to navigate to the goal location
while avoiding the `hazard' locations. Each of the $1000$ episodes
are run for $1000$ steps. As in the case of the other environments,
we initially obtained source policies by separately training a DDPG
\cite{lillicrap2015continuous} agent on $3$ tasks. Using these
source policies, we implemented our described approach for safe exploration.
For the DDPG implementation, the critic and target networks were multi-layer
perceptrons with $3$ and $2$ layers respectively, with the former
having $1024,512$ and $300$ nodes in its three layers, and the latter
with $512$ and $128$ nodes in its layers. The learning rates for
both networks were set to $1e-4$, the soft target update parameter
$\tau$ was set to $1e-2$, the discount factor was set to $0.99$
and the replay buffer size was set to be $100000$. For PPO, the hyperparameters
used were consistent with those used in Ray et al. \cite{raybenchmarking}.
The hyperparameters specific to the approach described here are $\rho=0.95$
and threshold $t=0.1$.

\begin{figure}[H]
\centering{}\includegraphics[width=0.8\columnwidth]{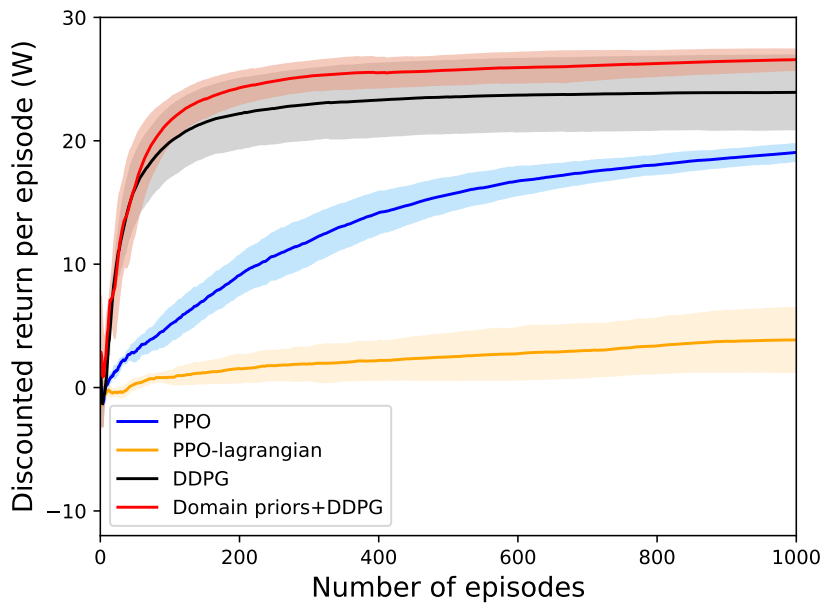}\caption{The average discounted returns per episode ($W$), computed over $3$
trials, for different learning methods in the PointGoal1-12 Safety
Gym environment.}
\label{fig:performance_learning-2}
\end{figure}

\begin{figure}[H]
\centering{}\includegraphics[width=0.8\columnwidth]{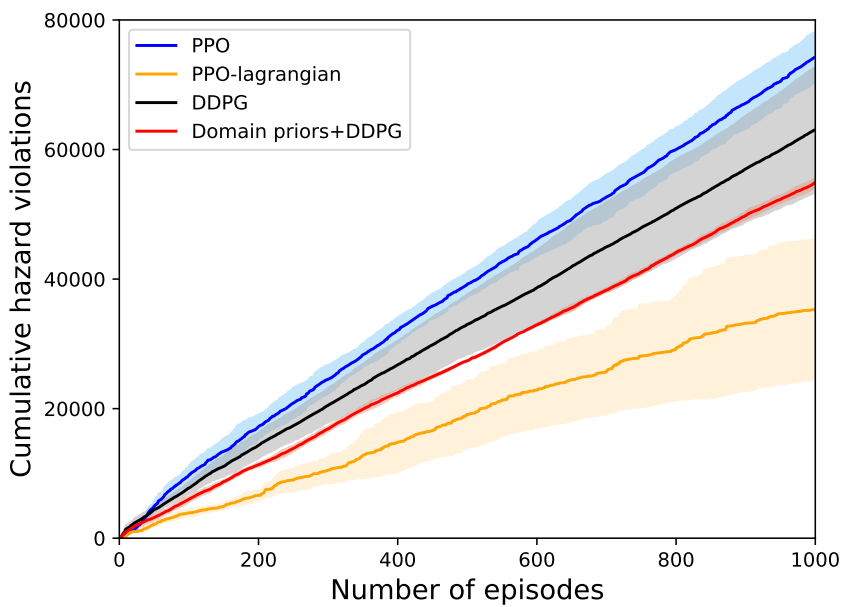}\caption{The cumulative number of obstacle collisions, computed over $3$ trials,
for different learning methods in the PointGoal1-12 Safety Gym environment.}
\label{fig:performance_failures-2}
\end{figure}

As the environment contains a continuous action space, biasing the
exploration exactly as described in Algorithm \ref{alg:algorithm2}
is infeasible. In order to circumvent this issue, we randomly sampled
$100$ actions from a uniform distribution in the allowable range
of actions, $(-1,1)$, essentially discretizing the action space.
Following this, we proceeded to bias the actions as per Algorithm
\ref{alg:algorithm2}. The actions were biased with a probability
proportional to an exploration bias factor, which started with an
initial value of $1$, and decayed exponentially by a factor of $0.95$
at the end of each episode.

As depicted in Figures \ref{fig:performance_learning-2} and \ref{fig:performance_failures-2},
the use of priors helps improve both learning as well as safety performance.
As also noted in Ray et al. \cite{raybenchmarking}, although the
learning performance of the PPO-Lagrangian approach is poor, it exhibits
a much superior safety performance. However, it must be pointed out
that this method has explicit access to a constraint violation function,
while our approach does not. 

\subsection{Prior Adaptation to Modified Environments}

\begin{figure*}[t]
\begin{centering}
\includegraphics[scale=0.8]{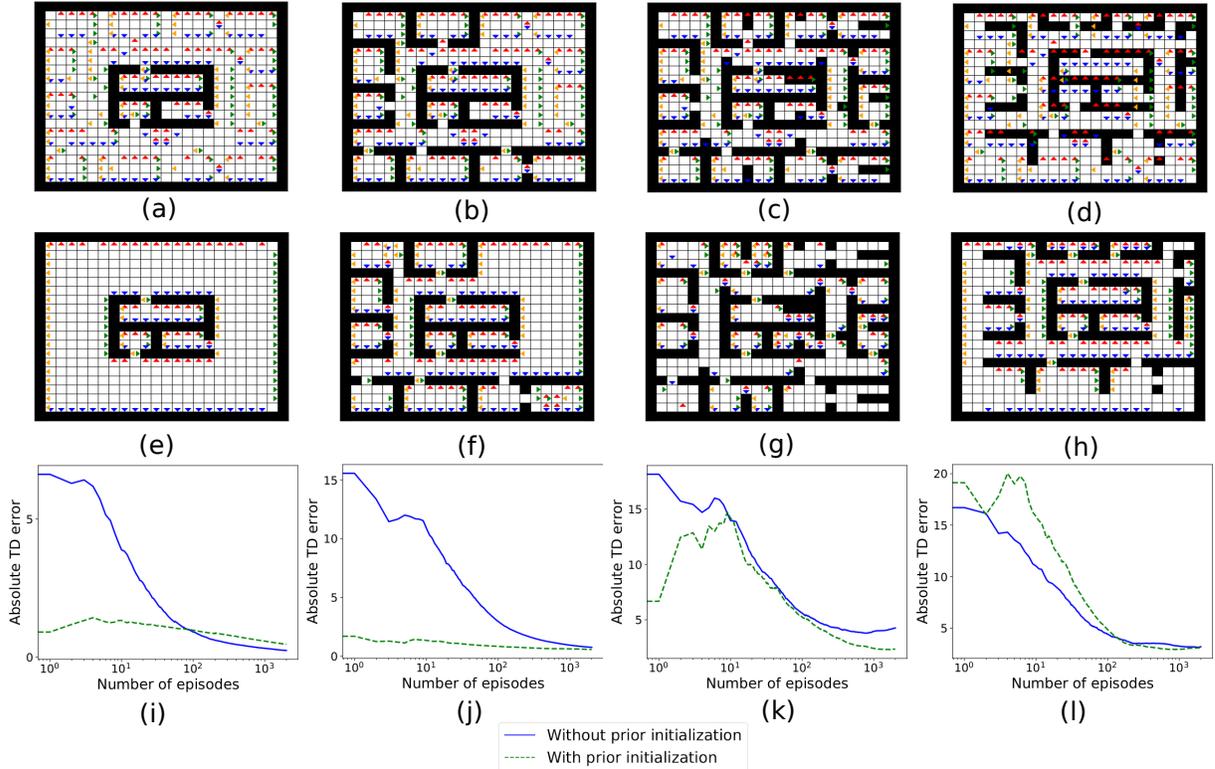}
\par\end{centering}
\caption{(a)-(d) show the consistently undesirable actions corresponding to
the original environment in Figure \ref{fig:environments}(a), overlaid
on top of four modified environments. (e)-(h) show these environments,
with actions that are actually undesirable in them. (i)-(l) show the
absolute TD errors associated with the learning of $Q_{P}$ for these
environments, with and without prior initialization.}
\label{fig:adapting_priors}
\end{figure*}

As shown in Sections \ref{subsec:Classical-Navigation-Environment},\ref{subsec:AI-Safety-Gridworld}
and \ref{subsec:AI-Safety-Gym}, learned priors can effectively help
avoid undesirable exploratory actions while learning an arbitrary
task in the domain. However, if the environment was to undergo a change
in configuration, the set of actions associated with unsafe agent
behaviors would not remain the same. Nevertheless, provided these
changes are not too drastic, the priors learned from the original
environment could still serve as a useful initialization for learning
the corresponding priors in the modified environment. In other words,
the priors may be transferable to the modified environments. This
is an advantage that is specific to our approach, and is enabled by
the fact that our priors are adaptive, and are inherently tied to
the structure of the domain. In addition, the adaptive nature of the
priors ensures that in time, they become well-suited to the modified
environment, the with the adaptation time depending on the degree
of dissimilarity between the two environments.

Here, we design experiments in the tabular environment in Section
\ref{subsec:Classical-Navigation-Environment}, to demonstrate this
transferability to modified versions of the original environment in
Figure \ref{fig:environments}(a), shown in Figures \ref{fig:adapting_priors}(a)-(d).
Obstacles were either added or removed from the original environment
(Figure \ref{fig:environments}(a)) to obtain the modified environments
in Figures \ref{fig:adapting_priors}(a)-(c), whereas the environment
in Figure \ref{fig:adapting_priors}(d) was created by offsetting
most obstacles 2 units upwards and to the right. The consistently
undesirable actions for the original environment in Figure \ref{fig:environments}(a)
are overlaid on top of the modified environments in Figures \ref{fig:adapting_priors}(a)-(d),
whereas the correct set of consistently undesirable actions for the
modified environments are shown in Figures \ref{fig:adapting_priors}(e)-(h).
Despite the differences between the undesirable actions of the original
and modified environments, there exists some structural similarity
between them. Hence, it is reasonable to expect the priors learned
in the original environment to be at least partially transferable
to the modified environments. Specifically, we posit that the learned
prior for the original environment forms a reasonable initial estimate
for learning the corresponding priors in the modified environments,
as long as the differences between the two are not drastic. 

In order to test this hypothesis, the priors for the modified environments
were learned with and without these initial estimates. In both cases,
the associated absolute TD errors decrease, as shown in Figures \ref{fig:adapting_priors}(i)-(l),
which demonstrates the capability of the priors to adapt to different
environments. Figures \ref{fig:adapting_priors}(i)-(k) suggest that
initialization of the priors could lead to significantly lowered initial
absolute TD errors compared to the case of learning the priors from
scratch (without initialization). However, initializing the priors
in this manner was not found to be useful for the environment in Figure
\ref{fig:adapting_priors}(d), where the effect of the initialization
was to slightly increase the initial absolute TD error, as depicted
in Figure \ref{fig:adapting_priors}(l). This is due to the fact that
the nature of the differences in the obstacle configuration in Figure
\ref{fig:adapting_priors}(d) and Figure \ref{fig:environments}(a)
renders the prior learned in the latter ineffective with respect to
learning the prior in the former. These experiments demonstrate that
while the prior learned using the described approach is transferable
to some extent, it is not transferable in general.

\section{Discussion}

The proposed methodology allows RL agents avoid undesirable actions
during learning by making use of a learned prior policy. Although
our approach as described, deals with avoiding undesirable actions,
it can be easily adapted to scenarios where there exist actions that
are commonly desirable across the tasks in the domain. Such an adaptation
would involve replacing the advantage $A_{i}^{*}(s,a)$ with $B_{i}^{*}(s,a)=Q_{i}^{*}(s,a)-\underset{a'\in\mathcal{A}}{min}Q_{i}^{*}(s,a')$,
in addition to replacing Equation \ref{eq:weighted_adv} with $w_{i}(s,a)=\left|\frac{B_{i}^{*}(s,a)}{\underset{a'\in\mathcal{A}}{max}\thinspace Q_{i}^{*}(s,a')}\right|$.
The resulting prior could then simply be used to guide exploration,
by taking exploratory actions that are greedy with respect to $Q_{P}^{*}$
with a high probability. Such an approach appeared to be successful
in versions of the tabular environment (similar to that described
in Section \ref{subsec:Classical-Navigation-Environment}) where a
non-goal, rewarding state was introduced into all tasks in the domain.
Although the approach is useful for such specific situations, in general,
exploring the state-action space by greedily exploiting the prior
in this manner could lead to poor learning performances, as it may
limit the agent's exploration. Hence, achieving safe learning behaviors
is a more practical use-case for the approach described in this work. 

The ability to avoid undesired actions during learning makes the proposed
approach potentially useful for real-world systems which are often
intolerant of poor actions. Our approach would thus be useful in scenarios
where the associated marginal increase in memory and computational
costs are outweighed by the costs of executing unsafe actions. 

Although we only consider cases where tasks vary solely in the reward
function, this could lay the foundation for more general work, where
tasks vary in other aspects such as the representation, transition
function or the state-action space. 

\section{Conclusion}

We presented a method to extract priors from a set of known tasks
in the domain. The prior is learned in the form of a $Q$-function,
and is based on inferred rewards corresponding to consistently undesirable
actions across these tasks. The effectiveness of the prior in enabling
safe learning behaviors was demonstrated in discrete as well as continuous
environments, and its performance was compared to various baselines.
This was further supported by our theoretical analysis, which suggests
that the use of these priors helps reduce the probability of taking
unsafe exploratory actions. In addition to leading to safer learning
behaviors for arbitrary tasks in the domain, the priors were shown
to be transferable to some extent, and capable of adapting to changes
in the environment.

\bibliographystyle{plain}
\bibliography{example}

\newpage

\section*{Supplementary material:}

\section{Application to common reward case}

In domains where there exists a common, non-terminal rewarding state
$s_{com}$, the proposed approach can be modified to positively bias
the agent towards taking greedy actions with respect to the learned
prior $Q_{P}$, as described in the discussion section. By doing so,
we shift the focus of the algorithm to finding consistently desirable
actions across the known tasks in this common reward environment.
Here, we present one such environment, where in addition to the attributes
of the environment in Figure \ref{fig:environments} (a), there exists
a non-terminal rewarding state $s_{com}$ associated with a reward
of $0.2$, shown in Figure \ref{fig:commonrew_env}. In such a case, visiting state $s_{com}$ becomes a desirable
behavior across all tasks. Hence, the learned prior directs learning
agents towards this state, as seen in Figure \ref{fig:commonrew_priors}.
Such a bias in the exploration policy is also reflected in the performance
of the agent, as depicted in Figure \ref{fig:performance_commonrew}.

\begin{figure}[H]
\centering{}\includegraphics[width=0.6\columnwidth]{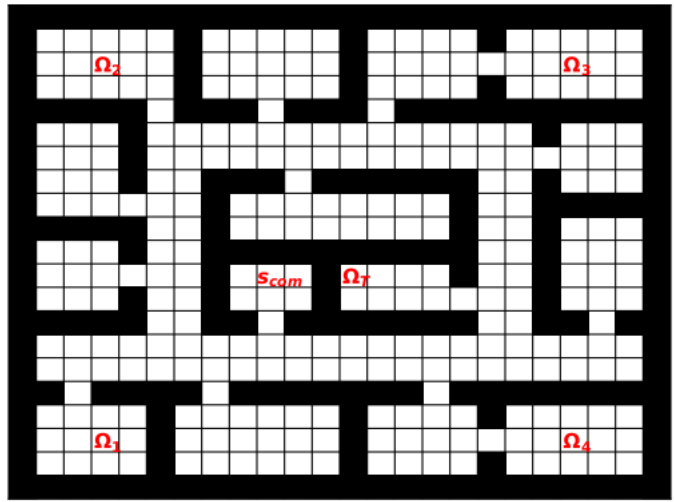}\caption{Navigation environment showing the goal locations $\text{\ensuremath{\Omega}}_{1},\text{\ensuremath{\Omega}}_{2},\text{\ensuremath{\Omega}}_{3},\text{\ensuremath{\Omega}}_{4}$
of the known tasks, common rewarding state $s_{com}$ and goal location
$\text{\ensuremath{\Omega}}_{\text{{T}}}$ of the task to be learned. }
\label{fig:commonrew_env}
\end{figure}

\begin{figure}[H]
\centering{}\includegraphics[width=0.6\columnwidth]{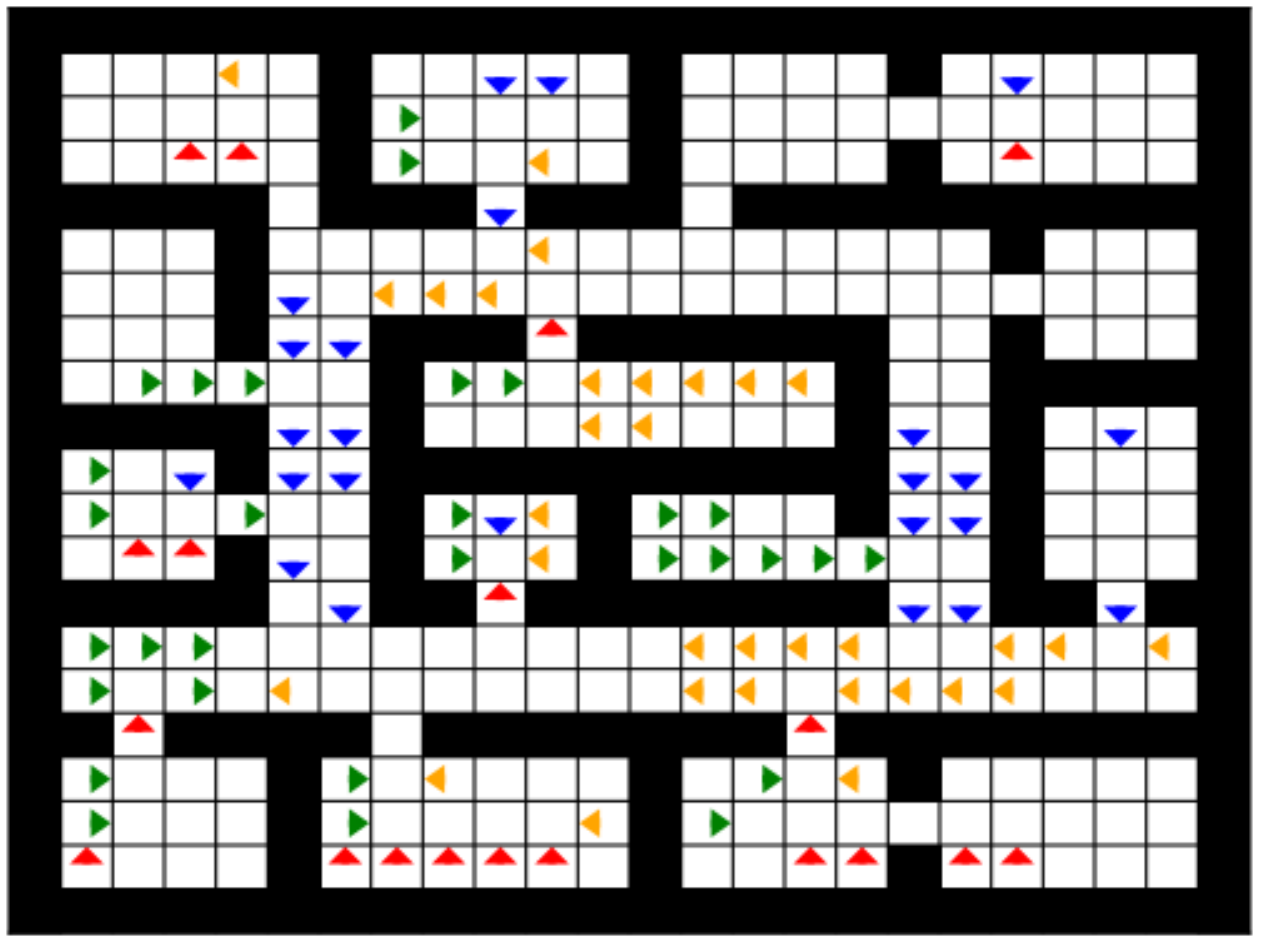}\caption{Identified desirable actions for the common reward environment in
Figure \ref{fig:commonrew_env}. }
\label{fig:commonrew_priors}
\end{figure}

\begin{figure}[H]
\centering{}\includegraphics[width=0.8\columnwidth]{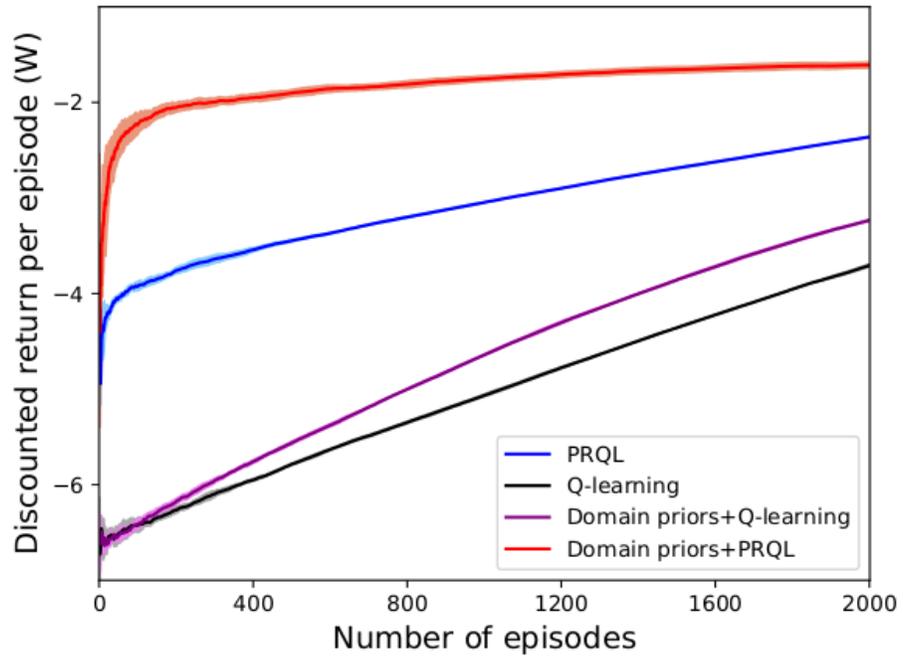}\caption{The average discounted return per episode ($W$), computed over $10$
trials, for different learning methods.}
\label{fig:performance_commonrew}
\end{figure}

\end{document}